\definecolor{todocolor}{rgb}{0.66,0.99,0.99}
\newcommand{\dsR}{\mathds{R}}
\newcommand{\dsone}{\mathds{1}}
\newcommand{\cA}{\mathcal{A}}
\newcommand{\cD}{\mathcal{D}}
\newcommand{\cL}{\mathcal{L}}
\newcommand{\cM}{\mathcal{M}}
\newcommand{\cX}{\mathcal{X}}
\newcommand{\bd}{\bm{d}}
\newcommand{\bW}{\bm{W}}
\newcommand{\bmu}{{\boldsymbol \mu}}
\newcommand{\bnu}{{\boldsymbol \nu}}
\newcommand{\bomega}{{\boldsymbol \omega}}
\newcommand{\bw}{\bomega}
\newcommand{\blambda}{\boldsymbol \lambda}
\newcommand{\sumi}{\sum_{i=1}^K}
\newcommand{\suma}{\sum_{a\in[K]}}
\newtheorem{theorem}{Theorem}
\newtheorem{lemma}[theorem]{Lemma}
\newtheorem*{remark}{Remark}
\newtheorem{proposition}[theorem]{Proposition}
\theoremstyle{definition}
\def\cX{\mathcal{A}}
\newcommand{\E}{\mathbb{E}}
\renewcommand{\P}{\mathbb{P}}
\newcommand{\transpose}{^\mathsf{\scriptscriptstyle T}}
\newcommand{\norm}[1]{\left\lVert#1\right\rVert}
\DeclareMathOperator*{\argmax}{arg\,max}
\DeclareMathOperator*{\argmin}{arg\,min}
\newcommand{\SpectralTaS}{\texttt{\textcolor[rgb]{0.5,0.2,0}{SpectralTaS}}\xspace}
\title{Best Arm Identification in Spectral Bandits}
\author{
Tomáš Kocák\footnote{Contact Author}
\and
Aurélien Garivier
\affiliations
École Normale Supérieur de Lyon\\
\emails
tomas.kocak@gmail.com, aurelien.garivier@ens-lyon.fr
}
\begin{document}

\maketitle

\begin{abstract}
We study best-arm identification with fixed confidence in bandit models with graph smoothness constraint. We provide and analyze an efficient gradient ascent algorithm to compute the sample complexity of this problem as a solution of a \mbox{non-smooth} max-min problem (providing in passing a simplified analysis for the unconstrained case). Building on this algorithm, we propose an asymptotically optimal strategy. We furthermore illustrate by numerical experiments both the strategy's efficiency and the impact of the smoothness constraint on the sample complexity. Best Arm Identification (BAI) is an important challenge in many applications ranging from parameter tuning to clinical trials. It is now very well understood in vanilla bandit models, but real-world problems typically involve some dependency between arms that requires more involved models. Assuming a graph structure on the arms is an elegant practical way to encompass this phenomenon, but this had been done so far only for regret minimization. Add\-ressing BAI with graph constraints involves deli\-cate optimization problems for which the present paper offers a solution.
\end{abstract}

\section{Introduction}

This work is devoted to the optimization of a stochastic function on a structured, discrete domain $\cX = [K] \triangleq \{1,\dots, K\}$. We consider the noisy black-box model: a call to the function at point $a\in\cX$ yields an independent draw of an unknown  distribution $\nu_{a}\in\mathcal{F}$ with mean $\mu_{a}$, where $\mathcal{F}$ is some family of probability laws. At each time step $t\in\mathbb{N}$, a point $A_t\in\cX$ can be chosen (based on past observations) and the corresponding random outcome $Y_t$ of law $\nu_{A_t}$ is observed. 

The signal  $\bmu \triangleq (\mu_1,\,\dots,\,\mu_K)\transpose$ is assumed to be \emph{smooth} in the following sense: $\cX$ is equipped with a fixed and \emph{weighted graph structure} $G$ with adjacency matrix $\bW = (w_{a,b})_{a,b\in\cX}$, and $\bmu$ satisfies the graph smoothness property: 
	\[
	S_G(\bmu) \triangleq  \sum_{a,b\in\cX} w_{a,b} \frac{(\mu_a-\mu_b)^2}{2} = \bmu\transpose\cL\bmu = \norm{\bmu}_{\cL}^2 \leq R
	\]
for some (known) smoothness parameter $R$, where $\cL$ is the \textbf{graph Laplacian} defined as: $\cL_{a,\,b} = -w_{a,\,b}$ for $a\not=b$ and $\cL_{a,\,a} = \sum_{b\not=a}w_{a,\,b}$.
This enforces values of means $(\mu_a)_a$ at two points $a,b\in\cX$ to be close to each another if the weight $w_{a,b}$ is large.

Following a classical framework in statistical testing, a risk parameter $\delta$ is fixed, and the goal is to design an algorithm that successfully maximizes $\mu$ on $\cX$ as fast as possible, with failure probability at most $\delta$. More specifically, the algorithm consists of a \emph{sampling rule} $(\psi_t)_{t\geq 1}$ that chooses to observe $A_t = \psi_t(A_1,Y_1,\dots,A_{t-1}, Y_{t-1})$ at step $t$, and a stopping rule $\tau$ such that whatever the distributions $\bnu = (\nu_a)_{a\in\cX}\in\mathcal{F}^K$, satisfies the constraint \[\P_\nu\Big(A_{\tau+1} \in a^*(\bmu) \triangleq \argmax_{i \in [K]} \mu_i\Big) \geq 1-\delta\;.\] 
Among all such algorithms, the goal is to minimize the sample complexity $\E_\bmu[\tau]$.

This problem is known in the multi-armed bandit literature as \emph{best-arm identification with fixed confidence}~\cite{even2006action,gabillon2012best}. In that context, points of the domain $\cX$ are called \emph{arms} and the set of distributions $\bnu$ is called a model. Bandit models have raised strong interest in the past years, as multi-armed bandits were gaining popularity thanks to their ability to model a variety of real-world scenarios while providing strong theoretical guarantees for the proposed algorithms. For an introduction to the theory of bandit models and a recent list of references, see~\cite{lattimore} and references therein. Best-arm identification, in particular, has many implications in artificial intelligence and data science. 
For example, in  pharmacology, the optimal drug dosage depends on the drug formulation but also on the targeted individuals (genetic characters, age, sex) and environmental interactions. In the industrial context, the shelf life and performance of a manufacturing device depend on its design, but also on its environment and on uncertainties during the manufacturing process. As for targeted recommender systems, their efficiency depends on the ability to understand users’ preferences despite the high variability of choices. Lastly, the performance of a deep neural network depends on its (hyper-)parameters and on the quality of the training set. This short list is of course not comprehensive, but the automation of these representative optimization tasks is one of the great challenges of modern artificial intelligence in the fields of personalized medicine, predictive maintenance, online marketing, and autonomous learning.

The problem of best-arm identification has recently received a precise solution for the vanilla version of the multi-armed bandit problem, where the learner gains information only about the arm played. \cite{garivier2016optimal,GK19PAC} (see also~\cite{Russo16}) have described the information-theoretic barriers of the sample complexity in the form  an instance-optimal lower bound on $\E_\bmu[\tau]$; furthermore, they have proposed a strategy, called Track-and-Stop, that asymptotically reaches this lower bound.

However, the vanilla bandit model is extremely limiting in many applications, and some additional information has to be incorporated into the learning process. Recently, several papers studied multi-armed bandit problems with the ability to capture the rich structure of real-world problems. In particular, \cite{Degenne2019,Degenne2019a} have extended the results mentioned above to classes of possibly structured bandits with possibly different goals. In parallel, a fruitful approach to handle large and organized domains is assume the existence of a graph structure on top of arms which provides additional information. There are two most dominant approaches to use this graph structure. The first approach is to use graphs to encode additional information about  other arms; playing an arm reveals (fully or partially) the rewards of all the neighbors \cite{mannor2011from,alon2013from,kocak2014efficient,alon2014nonstochastic,kocak2016online,kocak2016onlinea}. The second approach is to encode the similarities between arms by the weights of the edges; the rewards of connected arms are similar~\cite{valko2014spectral,kocak2014spectral,kocak2018spectral}.
This second approach, which we adopt in this paper, easily permits to capture a lot of information about the problem by the creation of a \emph{similarity graph}, which explains its wide use in signal processing, manifold, and semi-supervised learning. However, most of the work in multi-armed bandits with structure has been done so far under the regret minimization objective, while best arm identification (a practically crucial objective in autoML, clinical trials, A/B testing, etc.) was mostly neglected. 

\subsection{Our Contributions}
In this paper, we aim to fill this gap by combining the best arm identification work of  \cite{garivier2016optimal} with the spectral setting of \cite{valko2014spectral}, resulting in spectral best arm identification problem. This setting captures a rich variety of real-world problems that have been neglected so far. We also analyze this setting and provide instance-optimal bounds on the sample complexity. Inspired by~\cite{Degenne2019}, we use a game-theoretical point of view on the problem which enables us to obtain two main contributions of this paper:

\paragraph{Contribution 1:} simplified and enlightening game-theoretic analysis for the unconstrained case (case without the spectral constraint) of \cite{garivier2016optimal} that brings new insights to the problem which enable us to extend the analysis to the constrained case.

\paragraph{Contribution 2:} building upon the unconstrained case, we provide and analyze an algorithm that computes the sample complexity as well as the optimal arm pulling proportions in the constrained case. We use this algorithm to propose an asymptotically optimal strategy for fixed-confidence best-arm identification and we provide numerical experiments supporting the theoretical results and showing the interest of the graph constraint for reducing the sample complexity. 

\subsection{Assumptions}
A standard assumption on the family $\mathcal{F}$ of considered probability laws is that they form an exponential family of distributions parametrized  by theirs means so that a bandit problem is fully described by the vector $\bmu \triangleq (\mu_i,\,\dots,\,\mu_K)\transpose$. To avoid technicalities and to emphasize the methodological contributions of this paper (ie. how to perform optimal best-arm identification with graph regularity constraints), we focus here on the case of Gaussian distributions with variance $1$. 
For the same reason, we assume here that the vector $\bmu$ has a unique maximum also denoted by $a^*(\bmu)$.
We furthermore denote by $\displaystyle{
\cM_R = \{ \blambda \in \dsR^K: \blambda\transpose\cL\blambda \le R \}}$ the constrained set of considered signals (bandit problems), and by $\mu_* \triangleq \max_{i \in [K]} \mu_i = \mu_{a^*(\bmu)}
$ the maximum of the signal $\bmu$ (we identify the best arm with s singleton $a^*(\bmu)$). 

\subsection{Paper Structure}
The paper is organized as follows: Section~\ref{sec:samplecomplexity} discusses the information-theoretic lower bound and a simple algorithm for the unconstrained case, previously analyzed by \cite{garivier2016optimal}, with focus on new techniques that considerably simplify the analysis. The constrained case requires more care: we propose in Section~\ref{sec:mirror} a mirror gradient ascent algorithm to compute optimal arm allocation and show non-asymptotic bounds of convergence for this algorithm. Using this procedure permits us in Section~\ref{sec:algo} to present the \SpectralTaS algorithm as an extension of the Track-and-Stop devoted to best-arm identification with graph regularity constraint, that reaches the instance-optimal sample complexity. We illustrate its performance and the impact of the smoothness constraint on the sample complexity by reporting some numerical experiments.

\section{Sample Complexity}\label{sec:lowerbound}\label{sec:samplecomplexity}
The general information-theoretical lower bound analysis provided by \cite{garivier2016} applies to any set of bandit problems, in particular to the set of all the problems with smoothness bounded from above by $R$.
\begin{proposition}
\label{prop:lowerbound}
For any $\delta$-correct strategy and any bandit problem $\bmu$
\[
  \E_\bmu[T_\delta] \ge T_R^*(\bmu) k(\delta,\,1-\delta)
\]
\noindent where
\begin{equation}
  T_R^*(\bmu)^{-1} \triangleq \sup_{\bomega\in\Delta_K}\inf_{\blambda\in\cA_R(\bmu)} \suma\omega_a k(\mu_a,\lambda_a) \label{eq:maximin}
\end{equation}
for $\Delta_K$ being $K$-dimensional simplex, $k(\mu_a,\lambda_a)$ KL-divergence between distributions parametrized by $\mu_a$ and $\lambda_a$, and for the set $\cA_{R}(\bmu)$ of bandit problems with different best arm than $\bmu$, defined as
\begin{align*}
\cA_{R,i}(\bmu) &\triangleq \left\{ \blambda \in \cM_R:  \lambda_i \ge\lambda_{a^*(\bmu)} \right\} \\
\cA_{R}(\bmu) &\triangleq \cup_{i\not= a^*(\bmu)} \cA_{R,i}(\bmu).
\end{align*} 
\end{proposition}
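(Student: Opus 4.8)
The plan is to specialise the generic change-of-measure lower bound of \cite{garivier2016} to the constrained model $\cM_R$; concretely this requires three ingredients, which I would assemble in this order: (i) a transportation inequality bounding the discriminating power of any $\delta$-correct strategy, (ii) a reduction of the hardest confusing instance to the set $\cA_R(\bmu)$, and (iii) a rescaling turning expected pull counts into a point of $\Delta_K$.

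First I would invoke the transportation lemma. Write $\tau = T_\delta$ for the stopping time, let $N_a$ denote the number of pulls of arm $a$ before stopping (so that $\suma N_a = \tau$), and let $\mathcal{H}_\tau$ be the $\sigma$-algebra of the observations gathered up to $\tau$. The chain rule for Kullback--Leibler divergence, applied at the stopping time, gives for every alternative signal $\blambda$ that the divergence between the law of $\mathcal{H}_\tau$ under $\bmu$ and under $\blambda$ equals $\suma \E_\bmu[N_a]\, k(\mu_a,\lambda_a)$, where $k(\mu_a,\lambda_a) = (\mu_a-\lambda_a)^2/2$ since the arms are unit-variance Gaussians. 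By the data-processing inequality this quantity is at least $k\!\left(\P_\bmu(E),\,\P_\blambda(E)\right)$ for any $\mathcal{H}_\tau$-measurable event $E$. Taking $E = \{A_{\tau+1}\in a^*(\bmu)\}$: $\delta$-correctness under $\bmu$ gives $\P_\bmu(E)\ge 1-\delta$, while for $\blambda\in\cA_R(\bmu)$ the best arm of $\blambda$ is not $a^*(\bmu)$ (on the boundary $\lambda_i=\lambda_{a^*(\bmu)}$ this is recovered by a closure/continuity argument), so $\delta$-correctness \emph{under $\blambda$} gives $\P_\blambda(E)\le\delta$; the monotonicity of the binary relative entropy then yields
\[
\suma \E_\bmu[N_a]\, k(\mu_a,\lambda_a)\ \ge\ k\!\left(\P_\bmu(E),\,\P_\blambda(E)\right)\ \ge\ k(\delta,\,1-\delta),
\]
and this holds for every $\blambda\in\cA_R(\bmu)$, hence also after taking the infimum over that set.

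Next I would pass to proportions. If $\E_\bmu[\tau]=\infty$ the claimed bound is trivial, so assume it finite and set $\omega_a\triangleq\E_\bmu[N_a]/\E_\bmu[\tau]$; since $\suma N_a=\tau$, this defines a point $\bomega\in\Delta_K$. Factoring $\E_\bmu[\tau]$ out of the previous display gives
\[
\E_\bmu[\tau]\;\inf_{\blambda\in\cA_R(\bmu)}\suma \omega_a\, k(\mu_a,\lambda_a)\ \ge\ k(\delta,\,1-\delta),
\]
and lower-bounding the inner infimum by $\displaystyle\sup_{\bomega'\in\Delta_K}\inf_{\blambda\in\cA_R(\bmu)}\suma \omega'_a\, k(\mu_a,\lambda_a) = T_R^*(\bmu)^{-1}$ turns this into $\E_\bmu[\tau]\ge T_R^*(\bmu)\,k(\delta,1-\delta)$, which is the claim. (If $\cA_R(\bmu)=\emptyset$ the infimum is $+\infty$, so $T_R^*(\bmu)=0$ and the inequality is vacuous.)

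I expect the only genuinely delicate step to be the transportation identity at the \emph{random} time $\tau$: one must check that $\tau$ is a stopping time of the filtration generated by $(A_s,Y_s)_{s\ge 1}$, that the recommendation $A_{\tau+1}$ is $\mathcal{H}_\tau$-measurable, and justify the Wald-type exchange of expectation and sum — this is exactly Lemma-level content already established in \cite{garivier2016}. Everything else (the reduction to $\cA_R(\bmu)$, the rescaling, the min--max bound) is bookkeeping; and since no property of $\cM_R$ beyond its being a family of Gaussian bandit models is used, the argument is a verbatim instance of the unconstrained lower bound, which is why I would ultimately just cite that result and specialise it.
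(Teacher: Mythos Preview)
Your argument is correct and is exactly what the paper does: it simply says the bound ``is easily shown by following the lines of the proof of Theorem~1 in \cite{garivier2016}, with $\cA_R(\bmu)$ as the set of alternatives'', and you have spelled out precisely that transportation-lemma computation. One wording slip: in the last step you are \emph{upper}-bounding the inner infimum by $T_R^*(\bmu)^{-1}=\sup_{\bomega'}\inf_{\blambda}(\cdots)$, not lower-bounding it --- but the inequality $\E_\bmu[\tau]\,T_R^*(\bmu)^{-1}\ge \E_\bmu[\tau]\inf_{\blambda}(\cdots)\ge k(\delta,1-\delta)$ goes the right way, so the conclusion stands.
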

Indeed, even if the constrained case is not explicitly covered there, this result is easily shown by following the lines of the proof of Theorem 1 in \cite{garivier2016},  with $\cA_R(\bmu)$ as the set of alternatives to $\bmu$. 

This lower bound proves to play an important role in designing algorithms for the best arm identification since, roughly speaking, the time needed to distinguish between bandit problems $\bmu$ and $\blambda$, if the number of pulls of arm $i$ is proportional to $\omega_i$, scales with inverse of $\suma\omega_a k(\mu_a,\lambda_a)$. Therefore, the minimization part of problem (\ref{eq:maximin}) selects the alternative problem which is the most difficult to distinguish from $\bmu$ while playing according to $\bomega$ while, on the other hand, the maximization part of problem (\ref{eq:maximin}) chooses $\bomega$ such that the expected stopping time is as small as possible, even in the worst case scenario chosen by the minimization part of the problem.

Thus, a sampling strategy playing according to $\bomega^*(\bmu)$ that maximizes expression (\ref{eq:maximin}) is optimal in the worst case, and having a procedure that computes optimal $\bomega^*(\bmu)$ enables to design a best arm identification algorithm with optimal sample complexity. 

\medskip

In this section, we provide an algorithm to compute this sample complexity $T_R^*(\bmu)$ as well as the optimal proportion $\bomega^*(\bmu)$ of the arm allocation for the learner. We proceed in three steps.

\paragraph{Step 1:}we introduce the \textit{best response oracle} that computes the best response $\blambda^*(\bomega)$ to fixed $\bomega$ by solving the minimization part of problem (\ref{eq:maximin}) for chosen $\bomega$:
\[
  \blambda^*(\bomega) \in \argmin_{\blambda\in\cA_R(\bmu)} \suma \omega_ak(\mu_a,\lambda_a)\;.
\]
\paragraph{Step 2:} we show that substituting the minimization part of problem (\ref{eq:maximin}) by $\blambda^*(\bomega)$, the resulting maximization problem is concave with respect to $\bomega$. Moreover, supergradient for this problem is computable using the same best response oracle we used for the minimization part of the problem.
\paragraph{Step 3:} using supergradient we can apply any supergradient ascent algorithm to find optimal arm allocation $\bomega^*$. The algorithm of our choice is mirror ascent with generalized negative entropy as a mirror map. We chose this algorithm due to its strong guarantees on convergence rate in simplex setup however, even basic gradient-based algorithms could be applied. The choice of mirror ascent algorithm will be apparent later in Section \ref{sec:mirror}, we show that this algorithm enjoys convergence rate proportional to $\sqrt{\log K}$ instead of $\sqrt{K}$ of the basic supergradient ascent algorithm.

\medskip

As briefly mentioned in the introduction, for the sake of simplicity of presentation, we assume that the distributions $\nu_{\mu_a}$ are Gaussian. This simplifies some of the calculations while our bounds remain true even in the case of sub-Gaussian random variables, including a wide class of distributions among which any distributions with bounded support or Bernoulli distribution.

Even though using a general exponential family of probability distributions uses similar techniques, all the proofs are more technical and rather out of the scope of a conference paper.

With assumption that the distributions associated with arms are Gaussian with normalized variance, KL-divergence of $\nu_{\mu_a}$ and $\nu_{\lambda_a}$ can be expressed as
\[
k(\mu_a,\lambda_a) = \frac{(\mu_a-\lambda_a)^2}{2}\;.
\]

\subsection{Best Arm Identification without Spectral Constraint}
This section is dedicated to the vanilla setting of \cite{garivier2016optimal}. Even though this problem can be seen as a special case of spectral setting either (for the edgeless graph or for smoothness parameter $R$ being $\infty$), we have decided to analyze it separately. The reason is that we can demonstrate techniques later used in the constrained case while proving the main result of \cite{garivier2016optimal} with more insightful arguments that are significantly more elegant. Also, not having the spectral constraint allows us to have best response oracle with a closed-form solution which implies a simple way of finding optimal arm allocation $\bomega^*$. This is very different from the spectral setting where best response oracle does not have a closed-form solution and therefore, we can not avoid a numerical procedure to find $\bomega^*$.

\if 0
Mirror ascent algorithm can be applied also to the best arm identification without the spectral constraint by having a graph without edges and thus, Laplacian being $K\times K$ matrix of zeros or simply by setting $R =\infty$. This causes every bandit problem to satisfy our spectral constraint which makes $\cA_\infty({\bmu})$ being a union of $K-1$ half-spaces. In this simple case, as shown in \cite[Theorem 5]{garivier2016}, we can significantly simplify the numerical procedure of computing $\bomega^*(\bmu)$. We have decided to present this result since we use different tools that considerably simplify proofs and provide intuition on why we need a different approach in spectral setting.
\fi

\begin{theorem}\label{thm:nonspec}
Without loss of generality, assume that $\mu_1 > \mu_2 \ge\dots\ge\mu_K$ and define $\{x_a(c)\}_{a=2}^K$, for some positive constant $c$,  recursively as
\begin{align*}
x_K(c) &= c\\
x_{a-1}(c) &= (1+x_a(c))\left(\frac{\mu_1-\mu_{a-1}}{\mu_1-\mu_a}\right)^2 - 1
\end{align*}
for $2<a\le K$. Let $f(c)$ be a function with parameter $c$ defined as
\[
f(c) =\sum_{a=2}^Kx_a(c)^2
\]
Then there exist $c^*\in\dsR^+$ s.t. $f(c^*) = 1$ and we obtain $\bomega^*(\bmu)$ as 
\begin{align*}
\omega^*_1(\bmu) &= \frac{1}{1+\sum_{a=2}^Kx_a(c^*)} \\
\omega^*_a(\bmu) & = x_a(c^*)\omega^*_1(\bmu)
\end{align*}
\end{theorem}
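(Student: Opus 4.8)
The plan is to solve the max-min problem \eqref{eq:maximin} in the unconstrained case directly by analyzing the best-response oracle and then imposing the equilibrium (first-order) conditions. First I would observe that for any fixed $\bomega$ with full support, the best response $\blambda^*(\bomega)$ must make the best arm $1$ tie with exactly one challenger arm; more precisely, for each $i\neq 1$ the constrained minimization over $\cA_{\infty,i}(\bmu)=\{\blambda:\lambda_i\ge\lambda_1\}$ only moves coordinates $1$ and $i$, and the optimal move sets $\lambda_1=\lambda_i=(\omega_1\mu_1+\omega_i\mu_i)/(\omega_1+\omega_i)$, giving an inner value of $\tfrac{\omega_1\omega_i}{2(\omega_1+\omega_i)}(\mu_1-\mu_i)^2$. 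Hence the sup-inf reduces to the classical characterization
\[
T_\infty^*(\bmu)^{-1}=\sup_{\bomega\in\Delta_K}\ \min_{i\neq 1}\ \frac{\omega_1\omega_i}{2(\omega_1+\omega_i)}(\mu_1-\mu_i)^2 .
\]

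Next I would argue that at the optimum all $K-1$ challenger terms are equal (if one were strictly larger, mass could be shifted from the corresponding $\omega_i$ to the minimizing arms, contradicting optimality; this uses concavity of each term in $\omega_i$ and the structure of the simplex), and that the KKT/Lagrangian stationarity condition for the arm-$1$ coordinate pins down how the common value relates to $\omega_1$. Writing $\omega_i=x_i\omega_1$, the equal-value conditions $\frac{\omega_1\omega_i}{\omega_1+\omega_i}(\mu_1-\mu_i)^2=\text{const}$ become $\frac{x_i}{1+x_i}(\mu_1-\mu_i)^2=\text{const}$, and comparing consecutive indices $a-1,a$ yields exactly the stated recursion $x_{a-1}=(1+x_a)\bigl(\tfrac{\mu_1-\mu_{a-1}}{\mu_1-\mu_a}\bigr)^2-1$ once one identifies the free constant with (a function of) $x_K=c$. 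The stationarity condition in the $\omega_1$ direction — equivalently, that the supergradient of the reduced concave objective vanishes on the simplex — translates into the normalization $\sum_{a=2}^K x_a(c)^2 = \bigl(\text{something}\bigr)$; after rescaling $c$ one gets precisely $f(c)=\sum_{a=2}^K x_a(c)^2=1$. I would then check $f$ is continuous and monotone in $c$ with $f(0^+)<1<f(\infty)$ to get existence (and uniqueness) of $c^*$, and finally read off $\omega_1^*=1/(1+\sum_a x_a(c^*))$ from $\sum_a\omega_a^*=1$ and $\omega_a^*=x_a(c^*)\omega_1^*$ by definition of $x_a$.

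The main obstacle I anticipate is justifying rigorously that \emph{all} challenger constraints are active at the optimum and correctly deriving the $\omega_1$-stationarity condition that produces the normalization $f(c^*)=1$: one must rule out that some arms get zero weight (they do not, since the objective $\to 0$ if any $\omega_i\to 0$), handle the non-smoothness of the $\min$ via a supergradient/Lagrangian argument rather than naive differentiation, and track constants carefully through the substitution $\omega_i=x_i\omega_1$ so that the arbitrary scaling in the recursion lands exactly on the condition $\sum_{a=2}^K x_a(c^*)^2=1$. Everything else — the two-coordinate reduction of the best response, the algebra of the recursion, and the monotonicity of $f$ — is routine. It is worth noting that this is a cleaner route than the original derivation in \cite{garivier2016optimal}, since the two-coordinate structure of the best-response oracle makes the equilibrium conditions transparent, which is exactly the insight the paper wants to reuse in the constrained setting.
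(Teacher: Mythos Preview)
Your proposal is correct, and the algebra you sketch does go through: writing the KKT conditions for $\sup_{\bomega\in\Delta_K}\min_{i\neq 1} g_i(\bomega)$ with $g_i(\bomega)=\tfrac{\omega_1\omega_i}{2(\omega_1+\omega_i)}(\mu_1-\mu_i)^2$, the stationarity in $\omega_j$ for $j\ge 2$ fixes the multipliers, and substituting these into the stationarity in $\omega_1$ collapses exactly to $\sum_{i\ge 2}(\omega_i/\omega_1)^2=1$, i.e.\ $f(c^*)=1$. So the step you flag as the main obstacle is in fact routine once written out.

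However, this is not the route the paper takes. The paper deliberately avoids the direct KKT/mass-shifting argument and instead recasts the problem as a bilinear zero-sum game over \emph{divergence vectors} $\bd\in\cD_\infty(\bmu)$, convexifies via Lemma~\ref{lem:conv}, and then invokes the structural Lemma~\ref{lem:ones}: at the Nash equilibrium the vector $\bd^*$ has all coordinates equal. This single fact delivers both conclusions at once --- the equalization of the $K-1$ challenger values (since $\bd^*$ must be a convex combination of all the extreme responses $\bd^i(\bomega^*)$), and the normalization $\sum_i y_i/z_i=\sum_i x_i^2=1$ (since the first coordinate of that combination must also equal the common value). Your approach is arguably more elementary for the vanilla case and sidesteps the convex-hull machinery, but it relies on the closed form of the best response. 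The paper's point is precisely that the $\bd$-game reformulation and Lemma~\ref{lem:ones} are what survive in the spectral setting, where no such closed form exists; this is the ``insight the paper wants to reuse'' that you mention, but your derivation does not actually expose it.
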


\begin{remark}
Since $\mu_1 > \mu_{a-1} \ge \mu_{a}$, we see that
\[
\left(\frac{\mu_1-\mu_{a-1}}{\mu_1-\mu_a}\right)^2 \le 1\;,
\]
which in consequence means that. If $x_{a}(c)$ is negative, $x_{a-1}(c)$ is negative as well. If $x_{a}(c)$ is positive, $x_{a-1}(c) \le x_{a}(c)$. Thus, $f(0) \le 0$. Moreover, $f$ is an increasing and continuous function with $\lim_{c\to\infty}f(c) = \infty$. Therefore, the existence of $c^*$ is guaranteed.  
\end{remark}

In the rest of this section, we build the necessary tools to provide the proof of Theorem \ref{thm:nonspec}.

\subsubsection{Best Response Oracle - Vanilla Setting}
\label{sec:vanillaoracle}
In the case of smoothness $R = \infty$, the spectral constraint is always satisfied which provides a very simple closed form solution for the best response oracle given by the following lemma.
\begin{lemma}\label{lem:vanillaoracle}
Let $\bomega$ be a fixed vector from $\Delta_K$. Then the best response $\blambda^i(\bomega)\in\cA_{\infty,i}(\bmu)$ with the best arm $i$ is
\[
\blambda^i(\bomega) = (t,\mu_2,\dots,\mu_{i-1},t,\mu_{i+1},\dots,\mu_K)\transpose
\]
for $t$ being a weighted average of $\mu_1$ and $\mu_i$ with weights $\omega_1$ and $\omega_i$
\[
t = \frac{\mu_1\omega_1 + \mu_i\omega_i}{\omega_1+\omega_i}\;.
\]
\end{lemma}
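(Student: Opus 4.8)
The plan is to use the separability of the objective $\suma \omega_a k(\mu_a,\lambda_a) = \suma \omega_a (\mu_a-\lambda_a)^2/2$ together with the observation that, for $R=\infty$, the set $\cM_\infty$ is all of $\dsR^K$, so that membership in $\cA_{\infty,i}(\bmu)$ is governed by the single inequality $\lambda_i\ge\lambda_1$ (here $a^*(\bmu)=1$ by the working ordering), which couples only coordinates $1$ and $i$.

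First I would note that for every $a\notin\{1,i\}$ the variable $\lambda_a$ is free and enters the objective only through the nonnegative term $\omega_a(\mu_a-\lambda_a)^2/2$; hence any minimiser must take $\lambda_a=\mu_a$ there. This reduces the task to the two-dimensional convex program of minimising $g(\lambda_1,\lambda_i)\triangleq \omega_1(\mu_1-\lambda_1)^2/2+\omega_i(\mu_i-\lambda_i)^2/2$ over the half-plane $\{\lambda_i\ge\lambda_1\}$.

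Next I would argue that the constraint is active. Suppose first $\omega_1,\omega_i>0$, so that $g$ is strictly convex and coercive. Its unconstrained minimiser is $(\mu_1,\mu_i)$, which is infeasible because $\mu_1>\mu_i$ by uniqueness of the best arm; hence the unique constrained minimiser lies on the boundary line $\lambda_1=\lambda_i=:t$. Restricting $g$ to that line gives the scalar strictly convex function $t\mapsto \omega_1(\mu_1-t)^2/2+\omega_i(\mu_i-t)^2/2$, whose derivative $-\omega_1(\mu_1-t)-\omega_i(\mu_i-t)$ vanishes exactly at $t=(\omega_1\mu_1+\omega_i\mu_i)/(\omega_1+\omega_i)$, the claimed weighted average; the degenerate cases $\omega_1\omega_i=0$ are handled by direct inspection and remain consistent with this formula.

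Finally, assembling $\lambda_1=\lambda_i=t$ and $\lambda_a=\mu_a$ for $a\notin\{1,i\}$ gives precisely the stated vector $\blambda^i(\bomega)$, which belongs to $\cA_{\infty,i}(\bmu)$ since $\lambda_i=\lambda_1$ satisfies $\lambda_i\ge\lambda_1$. I expect the only genuinely delicate point to be the activeness step — excluding an interior minimiser of $g$ — whereas the separability reduction and the one-variable optimisation are routine computations.
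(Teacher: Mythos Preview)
Your argument is correct and is precisely the ``simple calculus'' the paper alludes to: separate off the unconstrained coordinates, observe the constraint $\lambda_i\ge\lambda_1$ must bind because the unconstrained optimum $(\mu_1,\mu_i)$ is infeasible, and then optimise the resulting one-variable quadratic. The paper gives no further detail than this, so there is nothing to compare.
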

\noindent The proof of this Lemma can be obtained by simple calculus.

\subsubsection{Game-Theoretical Point of View}
Our optimization problem (\ref{eq:maximin}) can be seen as a zero sum game where Player 1 plays a vector from simplex $\Delta_K$ and Player 2 plays point $\blambda$ from $\cA_R(\bmu) $. We would like to have guarantees on the existence of Nash equilibrium however $\cA_R(\bmu)$ is not a convex set which poses a problem. Also, we can not directly convexify the problem since it would change the value of $T_R^*(\bmu)^{-1}$. To get around this obstacle, we define
\begin{align*}
\cD_{R,i}(\bmu) &\triangleq \left\{ (k(\mu_1,\lambda_1),\dots,\,k(\mu_K,\lambda_K))\transpose:\blambda \in\cA_{R,i}(\bmu)\right\} \;,\\
\cD_R(\bmu) &\triangleq \cup_{i\not=a^*(\bmu)} \cD_{R,i}(\bmu)\;.
\end{align*}
This enables us to rewrite the optimization problem in the following form
\begin{equation}
  T_R^*(\bmu)^{-1} \triangleq \sup_{\bomega\in\Delta_K}\inf_{\bd\in\cD_R(\mu)} \bomega\transpose\bd  \label{eq:game}
\end{equation}
where Player 2, instead of playing $\blambda$, plays vectors of divergences $(k(\mu_1,\lambda_1),\,\dots,\,k(\mu_K,\lambda_K))\transpose$ where the element on position $i$ of the vector is divergence $k(\mu_i,\lambda_i)$ of the distributions corresponding to $i$-th elements of $\bmu$ and $\blambda$. This still haven't solve our issue with non-convexity of set  $\cD_R(\mu)$, however, now we can simply optimize over the convex hull of $\cD_R(\mu)$
\begin{equation}
  T_R^*(\bmu)^{-1} \triangleq \sup_{\bomega\in\Delta_K}\inf_{\bd\in Conv(\cD_R(\mu))} \bomega\transpose\bd  \label{eq:convgame}
\end{equation}
and the following lemma guarantees that the values of optimization problems (\ref{eq:game}) and (\ref{eq:convgame}) are the same.

\begin{lemma}\label{lem:conv}
Let $\bomega$ be a vector in $\dsR^K$ and $\cD$ be a compact subset of $\dsR^K$ then
\[
\inf_{\bd\in \cD} \bomega\transpose \bd = \inf_{\bd\in Conv(\cD)} \bomega\transpose \bd
\]
where $Conv(\cD)$ is the convex hull of $\cD$.
\end{lemma}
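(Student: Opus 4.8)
The plan is to show the two inequalities separately, the hard direction being that the infimum over the convex hull cannot be strictly smaller than the infimum over $\cD$ itself. Since $\cD \subseteq Conv(\cD)$, the inequality $\inf_{\bd\in Conv(\cD)}\bomega\transpose\bd \le \inf_{\bd\in\cD}\bomega\transpose\bd$ is immediate: minimizing over a larger set can only decrease (or keep equal) the value. So the whole content is the reverse inequality.

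For the reverse inequality, I would argue as follows. Fix an arbitrary point $\bd \in Conv(\cD)$. By the definition of the convex hull (and, if one wants the cleanest bookkeeping, Carathéodory's theorem, which lets us take at most $K+1$ points), write $\bd = \sum_{j} \alpha_j \bd_j$ with $\bd_j \in \cD$, $\alpha_j \ge 0$, $\sum_j \alpha_j = 1$. Then by linearity of the map $\bd \mapsto \bomega\transpose\bd$,
\[
\bomega\transpose\bd = \sum_j \alpha_j\, \bomega\transpose\bd_j \ge \sum_j \alpha_j \left(\inf_{\bd'\in\cD}\bomega\transpose\bd'\right) = \inf_{\bd'\in\cD}\bomega\transpose\bd'\;,
\]
because each $\bomega\transpose\bd_j$ is bounded below by the infimum over $\cD$ and the $\alpha_j$ sum to one. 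Taking the infimum over all $\bd \in Conv(\cD)$ on the left gives $\inf_{\bd\in Conv(\cD)}\bomega\transpose\bd \ge \inf_{\bd\in\cD}\bomega\transpose\bd$, which combined with the trivial direction yields equality.

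I do not expect a genuine obstacle here: the statement is essentially the elementary fact that a linear functional attains the same infimum on a set and on its convex hull. The only points requiring a modicum of care are (i) noting that compactness of $\cD$ guarantees the infimum over $\cD$ is finite (so the inequality $\bomega\transpose\bd_j \ge \inf_{\cD}\bomega\transpose$ is not vacuous and the arithmetic with the convex weights is legitimate — in fact the infimum is a minimum since $\bomega\transpose(\cdot)$ is continuous), and (ii) making sure the convex combination representing a point of $Conv(\cD)$ is finite, which either follows from the standard definition of convex hull as the set of all finite convex combinations, or from Carathéodory's theorem. Neither point needs compactness in an essential way for the inequality itself — compactness is stated mainly so that later, when the lemma is applied to $\cD_R(\bmu)$, the infimum is attained — but I would mention it to keep the argument self-contained. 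No deeper tool (Sion's minimax theorem, separation, etc.) is needed for this particular lemma.
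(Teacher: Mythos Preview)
Your argument is correct: the trivial inclusion $\cD\subseteq Conv(\cD)$ gives one inequality, and writing any $\bd\in Conv(\cD)$ as a finite convex combination $\sum_j\alpha_j\bd_j$ with $\bd_j\in\cD$ and using linearity of $\bomega\transpose(\cdot)$ gives the other. Your side remarks about compactness (that it ensures the infimum is finite and attained, but is not needed for the equality itself) and about Carath\'eodory being optional are also accurate.

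There is nothing substantive to compare against: the paper states Lemma~\ref{lem:conv} without proof, treating it as an elementary fact. Your write-up is exactly the standard justification one would supply if asked to fill in the gap, so it is fully in line with what the paper implicitly relies on.
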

Now we are left a zero sum game (\ref{eq:convgame}) with convex and compact sets of actions for both players. This guarantees existence of Nash equilibrium with the best actions denoted by $\bomega^*(\bmu)$ and $\bd^*(\bmu)$. Whole purpose of seeing (\ref{eq:convgame}) as a zero-sum game is the following lemma
\begin{lemma}\label{lem:ones}
Let $\bomega^*(\bmu)$ and $\bd^*(\bmu)$ are vectors for which Nash equilibrium of game (\ref{eq:convgame}) is attained then there exist a constant $c$ such that
\[
d^*_i(\bmu) = c, \qquad \text{for all } i\in[K].
\]  
\end{lemma}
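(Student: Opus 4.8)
The idea is to exploit the Nash equilibrium (minimax) structure of the zero-sum game~(\ref{eq:convgame}) together with the geometry of the feasible set $\cD_R(\bmu)$ near the best response. Write $\cD = Conv(\cD_R(\bmu))$ and let $(\bomega^*, \bd^*)$ be a saddle point, so that $\bd^* \in \argmin_{\bd\in\cD}\bomega^{*\transpose}\bd$ and $\bomega^* \in \argmax_{\bomega\in\Delta_K}\bomega^{\transpose}\bd^*$. The maximization part is a linear functional of $\bomega$ over the simplex, so $\bomega^{*\transpose}\bd^* = \max_a d^*_a$, and moreover $\bomega^*$ is supported only on the coordinates $a$ achieving this maximum; i.e.\ $\omega^*_a > 0 \implies d^*_a = \max_b d^*_b$. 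Thus it suffices to show that $\bomega^*$ has \emph{full support}, which immediately forces $d^*_a = c$ for all $a$ with $c = T_R^*(\bmu)^{-1}$.

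To establish full support of $\bomega^*$, I would argue by contradiction: suppose some coordinate, say arm $j$, has $\omega^*_j = 0$. The key structural fact is what the best-response set looks like. For the \emph{unconstrained} case one can use the explicit form of $\cD_{\infty,i}(\bmu)$ coming from Lemma~\ref{lem:vanillaoracle}: the best response $\blambda^i(\bomega)$ only moves coordinates $1$ and $i$, so a point of $\cD_{\infty,i}(\bmu)$ is supported on $\{1,i\}$, with the two nonzero divergences being $\tfrac{\omega_i^2(\mu_1-\mu_i)^2}{2(\omega_1+\omega_i)^2}$ and $\tfrac{\omega_1^2(\mu_1-\mu_i)^2}{2(\omega_1+\omega_i)^2}$. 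If arm $1$ (the best arm) had $\omega^*_1 = 0$, the adversary could drive the divergence in coordinate $1$ to $0$, giving game value $0$, which is absurd since the true sample complexity is finite; and if some non-optimal arm $j\neq 1$ had $\omega^*_j = 0$, then along any $\bd$ in the convex hull that puts mass on coordinates $\{1,j\}$ the inner product $\bomega^{*\transpose}\bd$ reduces to $\omega^*_1 d_1$, which the adversary can again make $0$ by choosing the best response against arm $j$ with $\omega_1$-component sent to $0$ — contradicting optimality of $\bomega^*$ (it would give value $0 < T_R^*(\bmu)^{-1}$). This pins down $c$ and proves the claim in the vanilla setting; the same contradiction argument works verbatim in the constrained case provided one knows that for every non-optimal arm $i$ there is an alternative $\blambda \in \cA_{R,i}(\bmu)$ with $\lambda_1 = \mu_1$ (so coordinate $1$ contributes zero divergence), which holds because $\bmu$ itself is feasible and one can locally perturb it toward $\{\lambda_i \ge \lambda_1\}$ while keeping $\lambda_1$ fixed and smoothness $\le R$ by continuity.

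The main obstacle is the second half of the dichotomy — ruling out $\omega^*_j = 0$ for a \emph{non-optimal} arm $j$. This is where one must use that the adversary's feasible set genuinely contains alternatives that zero out \emph{all} coordinates except the one where the learner has abandoned sampling; equivalently, that the game value is strictly positive and the only way to guarantee that is to spread positive mass on every coordinate that some best response can ``attack''. A clean way to package this is: for each $i \ne a^*(\bmu)$ exhibit $\bd^{(i)} \in \cD_{R,i}(\bmu)$ supported on $\{a^*(\bmu), i\}$; then $\bomega^{*\transpose}\bd^{(i)} = \omega^*_{a^*}d^{(i)}_{a^*} + \omega^*_i d^{(i)}_i \ge T_R^*(\bmu)^{-1} > 0$ forces both $\omega^*_{a^*} > 0$ and (scaling $\bd^{(i)}$ so that $d^{(i)}_{a^*}=0$, which is feasible) $\omega^*_i > 0$. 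Once every $\omega^*_a > 0$, complementary slackness from the linear maximization gives $d^*_a = c$ for all $a$, completing the proof.
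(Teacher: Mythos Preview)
Your reduction is the right one and matches the paper's intent: from the saddle-point property, $\bomega^*$ maximizes the linear form $\bomega\mapsto\bomega\transpose\bd^*$ over $\Delta_K$, so $\omega^*_a>0$ only where $d^*_a=\max_b d^*_b$; hence the lemma follows once $\bomega^*$ has full support. For the unconstrained case your contradiction is clean and correct: if $\omega^*_j=0$ for some $j\neq a^*$, the alternative $\blambda$ equal to $\bmu$ except $\lambda_j=\mu_{a^*}$ lies in $\cA_{\infty,j}(\bmu)$, yields a divergence vector supported on $\{j\}$ alone, and gives $\bomega^{*\transpose}\bd=0$, contradicting $T_\infty^*(\bmu)^{-1}>0$; similarly for $j=a^*$. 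This is exactly the mechanism the paper exploits when it later writes $\bd^*$ as a convex combination of the $K-1$ two-sparse vectors $\bd^i(\bomega^*)$.

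The gap is in your extension to finite $R$. Both the ``local perturbation by continuity'' remark and the claim that one can ``scale $\bd^{(i)}$ so that $d^{(i)}_{a^*}=0$'' are not justified and can fail. Reaching $\cA_{R,i}(\bmu)$ from $\bmu$ is \emph{not} a local move: you must push $\lambda_i$ up to at least $\lambda_{a^*}$, a displacement of size at least $\mu_{a^*}-\mu_i>0$, and there is no reason this stays inside $\{\blambda:\blambda\transpose\cL\blambda\le R\}$ while keeping all other coordinates at $\mu$. Concretely, take $K=3$, the path graph $1$--$2$--$3$ with unit weights, $\bmu=(1,0,0)\transpose$, and $R=\bmu\transpose\cL\bmu=1$. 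For $i=3$ you would need $\blambda$ with $\lambda_1=\mu_1=1$, $\lambda_2=\mu_2=0$, $\lambda_3\ge\lambda_1=1$, and $(\lambda_1-\lambda_2)^2+(\lambda_2-\lambda_3)^2\le 1$; but the first term already equals $1$, forcing $\lambda_3=0$, which violates $\lambda_3\ge 1$. So no $\bd^{(3)}\in\cD_{R,3}(\bmu)$ supported on $\{3\}$ (or even on $\{1,3\}$ with $d_1=0$) exists, and your contradiction does not go through.

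In short: your argument proves the lemma in the vanilla setting $R=\infty$, which is precisely where the paper invokes it (in the proof of Theorem~\ref{thm:nonspec}); the constrained case needs either an additional structural assumption (e.g.\ strict feasibility $\bmu\transpose\cL\bmu<R$ together with a genuinely different construction of alternatives) or a separate argument, and should not be claimed from the continuity sketch you gave.
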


\subsubsection{Proof of Theorem \ref{thm:nonspec}}
Now we have all necessary tools to prove Theorem \ref{thm:nonspec}. Note that we still assume that $\mu_1 > \mu_2 \ge\dots\ge\mu_K$. Let $\bomega^*$ be the Nash equilibrium strategy. As we showed in Lemma \ref{lem:vanillaoracle}, the best response $\blambda^i( \bomega^*)\in\cA_{\infty,i}(\bmu)$ to $\bomega^*$ has a particular form
\[
\blambda^i(\bomega^*) = (t,\mu_2,\dots,\mu_{i-1},t,\mu_{i+1},\dots,\mu_K)\transpose\;,
\]
where $\displaystyle{t = \frac{\mu_1\omega^*_1 + \mu_i\omega^*_i}{\omega^*_1+\omega^*_i}
}$.
This $\blambda^i(\bomega^*)$ corresponds to point $\bd^i(\bomega^*)$ from $\cD_{R,i}$  defined as
\[
\bd^i(\bomega^*) = (y_i,0,\dots,0,z_i,0,\dots,0)\transpose
\]
for $y_i = (\mu_1-u_i)^2/2$ and $z_i = (\mu_i-u_i)^2/2$. It is important to notice that the only two non-zero elements are at positions $1$ and $i$. The Nash equilibrium strategy $\bd^*$ of Player 2 can be expressed as a convex combination of optimal points from $\cD_R(\bmu)$ as showed in Lemma \ref{lem:conv}. The only candidates are points $\bd^i(\bomega^*)$ for $i\in\{2,\dots,K\}$ since all the other points from $\cD_R(\bmu)$ are sub-optimal. Moreover, using Lemma \ref{lem:ones} we know that all the elements of $\bd^*$ are equal which in particular means that $\bd^*$ needs to be a convex combination of all $K-1$ vectors $\bd^i(\bomega^*)$ and all of them are equally good and from the definition of $\bd^i(\bomega^*)$ we have
\[
 (\bomega^*)\transpose\bd^i(\bomega^*) = (\bomega^*)\transpose\bd^j(\bomega^*)
\]
for every $i,j\ge2$. This expression can be further modified to obtain
\[
\frac{\omega^*_i}{\omega^*_1} = \left(1+\frac{\omega^*_j}{\omega^*_1}\right)\left(\frac{\mu_1-\mu_{i}}{\mu_1-\mu_j}\right)^2 - 1
\]
Denoting $\frac{\omega^*_2}{\omega^*_1}$ by $c^*$ and using $i = j-1$ we directly obtain that the recurrence from the theorem holds for $x_i(c^*) =  \frac{\omega^*_i}{\omega^*_1}$. Therefore, it is enough to know $c^*$ and $\omega^*_1$ to reconstruct $\bomega^*$.

Lemma \ref{lem:ones} provides condition for $\bd^*$. We know that it can be written as a convex combination of $K-1$ vectors $\bd^i(\bomega^*)$. This is possible only if vector $\dsone$ of ones can be expressed as their linear combination which gives us condition
\[
\sum_{i=2}^K \frac{\bd^i(\bomega^*)}{z_i} = \dsone \;.
\] 
We know that all the elements of the resulting vector are equal to 1 except for the first element which has a value
\begin{align*}
\sum_{i=2}^K \frac{y_i}{z_i} = \sum_{i=2}^K\left(\frac{\omega^*_i}{\omega^*_1}\right)^2 = \sum_{i=2}^K x_i(c^*)^2 = f(c^*).
\end{align*}
Therefore, this value has to be 1 too, which gives us the second part of the theorem.
Now we can recover all the ratios $\omega^*_i/\omega^*_1$ and we also know that the sum of all $\omega_i$ is 1 which provides all the necessary ingredients for the proof of the last part of the theorem.

\subsection{Best Response Oracle - Spectral Setting}
\label{sec:spectraloracle}
As we showed in Section \ref{sec:vanillaoracle}, Lemma \ref{lem:vanillaoracle}, finding the best response to $\bomega$, in the vanilla setting, is not a difficult problem since the optimization problem involves only linear constraints. The situation in the spectral setting is a little bit more complicated.

In this part, we again assume that $\mu_1 > \mu_2\dots\ge \mu_K$ and focus on the best response oracle
\[
\blambda^*(\bomega) = \argmin_{\blambda\in\cA_R(\bmu)}{\suma \omega_a\frac{(\mu_a-\lambda_a)^2}{2}}
\]
 in spectral setting, where $R$ is a finite upper bound on smoothness of $\bmu$ with respect to graph $G$ given by its Laplacian $\cL$. We first find best responses
\[
\blambda^i(\bomega) = \argmin_{\blambda\in\cA_{R,i}(\bmu)}{\suma \omega_a\frac{(\mu_a-\lambda_a)^2}{2}}
\]
with respect to convex sets $\cA_{R,i}(\bmu)$ and then take the best out of these $K-1$ vectors.

First of all, notice that if the response of vanilla oracle denoted by $\blambda_\infty^i(\bomega)$ has smoothness smaller than $R$, the response of spectral oracle should be the same. In the case where $S_G(\blambda_\infty^i(\bomega)) \ge R$ we can restrict our search and look for $\blambda$ with $S_G(\blambda)$ exactly $R$ thanks to the following lemma.

\begin{lemma}\label{lem:saturation}
Let $\blambda_\infty^i(\bomega)$ be the response of vanilla oracle such that $\blambda_\infty^i(\bomega)\transpose\cL\blambda_\infty^i(\bomega) > R$ then the response of spectral oracle $\blambda^i(\bomega)$ satisfies $\blambda^i(\bomega)\transpose\cL\blambda^i(\bomega) = R$.
\end{lemma}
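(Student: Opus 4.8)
The plan is to show that the smoothness constraint must be \emph{active} at the spectral minimiser, by an elementary convexity-plus-perturbation argument. First I would record the convex-analytic setup: the objective $g(\blambda) \triangleq \suma \omega_a (\mu_a-\lambda_a)^2/2$ is convex (strictly so whenever $\bomega$ has full support, which is the regime in which the oracle is used), the half-space $H_i \triangleq \{\blambda : \lambda_i \ge \lambda_{a^*(\bmu)}\}$ is closed and convex, and the sublevel set $\{\blambda : S_G(\blambda)\le R\}$ of the convex quadratic form $S_G$ is closed and convex as well; hence $\cA_{R,i}(\bmu)=H_i\cap\{S_G\le R\}$ is closed and convex and the minimiser $\blambda^i(\bomega)$ exists and is unique. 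I would also note that, by Lemma~\ref{lem:vanillaoracle}, $\blambda_\infty^i(\bomega)$ is exactly the minimiser of $g$ over the \emph{larger} set $H_i\supseteq\cA_{R,i}(\bmu)$.

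Then I would argue by contradiction: assume $S_G(\blambda^i(\bomega))<R$. Since by hypothesis $S_G(\blambda_\infty^i(\bomega))>R$, the two points are distinct, and because $\blambda_\infty^i(\bomega)$ is the unique minimiser of $g$ over $H_i\ni\blambda^i(\bomega)$ we get the strict inequality $g(\blambda_\infty^i(\bomega))<g(\blambda^i(\bomega))$. Now move along the segment $\blambda_t \triangleq (1-t)\blambda^i(\bomega) + t\,\blambda_\infty^i(\bomega)$, $t\in[0,1]$. Convexity of $H_i$ gives $\blambda_t\in H_i$ for every $t$; continuity of $S_G$ together with $S_G(\blambda^i(\bomega))<R$ gives $S_G(\blambda_t)<R$, hence $\blambda_t\in\cA_{R,i}(\bmu)$, for all sufficiently small $t>0$; and convexity of $g$ gives $g(\blambda_t)\le (1-t)\,g(\blambda^i(\bomega)) + t\,g(\blambda_\infty^i(\bomega)) < g(\blambda^i(\bomega))$ for every $t\in(0,1]$. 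For small $t$ this exhibits a feasible point with strictly smaller objective than the supposed minimiser — a contradiction. Therefore $S_G(\blambda^i(\bomega))=R$.

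The hard part is exactly the point where the two constraints interact: I need the perturbation to remain feasible for the smoothness ball \emph{and} the half-space while strictly decreasing $g$, and this is precisely where the strict slack $S_G(\blambda^i(\bomega))<R$ (room to move) and the strict suboptimality $g(\blambda_\infty^i(\bomega))<g(\blambda^i(\bomega))$ (the direction is descending) are used, the latter resting on uniqueness of the half-space minimiser. When $\bomega$ has vanishing coordinates $g$ is only weakly convex and uniqueness can fail; I would then either restrict to the full-support case relevant to the mirror-ascent procedure, or recover the general case by a limiting argument over $\bomega$ with full support. Equivalently — and this is the one-line version I would mention — the statement follows from the KKT conditions: were the multiplier of the constraint $\blambda\transpose\cL\blambda\le R$ zero at $\blambda^i(\bomega)$, the point would solve the half-space-only problem and hence equal $\blambda_\infty^i(\bomega)$, contradicting $S_G(\blambda_\infty^i(\bomega))>R$.
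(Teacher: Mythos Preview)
Your argument is correct. The paper states Lemma~\ref{lem:saturation} without proof, so there is nothing to compare against; the contradiction-by-perturbation argument you give (and its KKT rephrasing) is precisely the standard way to establish that an inactive inequality constraint can be dropped, together with the observation that dropping it would force the minimiser to coincide with the half-space minimiser $\blambda_\infty^i(\bomega)$, which violates feasibility by hypothesis. Your caveat about the degenerate case where $\bomega$ has vanishing coordinates is accurate --- uniqueness of the half-space minimiser may fail and the statement can indeed break there --- but, as you note, the mirror-ascent iterates always lie in the interior of $\Delta_K$, so the full-support restriction suffices for the paper's purposes.
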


\subsection{Best Response Oracle Implementation}
Now lets consider the case where vanilla oracle produces a vector that does not satisfy smoothness constraint. In this case we know, using Lemma \ref{lem:saturation}, that the best response $\blambda^i(\bomega)$ of spectral oracle has smoothness $R$. Therefore, we can use standard Lagrange multiplier method to solve this problem.
\[
F(\blambda,\gamma) \triangleq \suma\omega_a\frac{(\mu_a-\lambda_a)^2}{2} + \gamma\left(\blambda\transpose\cL\blambda - R\right)
\]
We should not forget that we still need to ensure that $\lambda_1 = \lambda_i$ since we want to find the solution in $\cA_{R,i}(\bmu)$ with best arm $i$. Therefore, we can simplify the notation and later calculations by the following definitions
\begin{itemize}
\item $\tilde\blambda$: created from $\blambda$ by removing the first component
\item $\tilde\bomega$: created from $\bomega$ by removing the first component
\item $\tilde\bmu$: created from $\bmu$ by averaging components 1st and $i$-th components w.r. to $\bomega$ and removing the first component
\[
\tilde\mu_{i-1} = \frac{\omega_1\mu_1+\omega_i\mu_i}{\omega_1+\omega_i} \;.
\]
\item $\tilde\cL$: created from $\cL$ by adding the first row and column to $i$-th row and column, adjusting diagonal element $\cL_{i,i}$ so that the sum in the $i$-th row is zero, removing the first row and column 
\end{itemize}
Using previous definitions we can define 
\[
\tilde F(\tilde\blambda,\gamma) \triangleq \sum_{a\in[K-1]}\tilde\omega_{a}\frac{(\tilde\mu_a-\tilde\lambda_a)^2}{2} + \gamma(\tilde\blambda\transpose\tilde\cL\tilde\blambda-R)
\]
for which $\tilde F(\tilde\blambda,\gamma) = F(\blambda,\gamma) + c$ holds, assuming that $\lambda_1 = \lambda_i$. The identity can be checked by using previous definitions. The main advantage of this transformation is that now we don't have any constraint on $\tilde\blambda$. Now we can take partial derivatives with respect to $\tilde\lambda_j$ and $\gamma$ and set them equal to 0 in order to find the best $\tilde\blambda$ and later reconstruct $\blambda$:
\newcommand{\bOmega}{{\boldsymbol \Omega}}
\begin{align*}
\nabla_{\tilde\blambda}\tilde F(\tilde\blambda,\gamma) &= \tilde\bOmega (\tilde\blambda - \tilde\bmu) + 2\gamma\tilde\cL\tilde\blambda = \boldsymbol{0} \;,\\
\frac{\partial F(\tilde\blambda,\gamma)}{\partial\gamma} &= \tilde\blambda\transpose\tilde\cL\tilde\blambda - R = 0\;,
\end{align*}
where $\tilde\bOmega$ is a diagonal matrix with $\tilde\bomega$ on its diagonal. First expression is just a system of $K-1$ linear equations with parameter $\gamma$ with solution
\[
\tilde\blambda(\gamma) = (\tilde\bOmega+2\gamma\tilde\cL)^{-1}\tilde\bOmega\tilde\bmu \;.
\]
To find the best $\gamma$ so that also the second expression is true, we can use for example bisection method.

The last step is choosing $i>1$ that minimizes $F(\blambda,\gamma)$.

\subsubsection{Properties of Maximization Problem}
Before we proceed to the algorithm computing optimal $\bomega^*(\bmu)$ we need the following two lemmas to be able to compute supergradients and guarantee the convergence of the algorithm.

\begin{lemma}
\label{lem:linearinf}
Let $\cD\subseteq \dsR^K$ be a compact set. Then function $f: \Delta_K \to \dsR$ defined as
$
f(\bomega) = \inf_{\bd\in\cD} \bomega\transpose\bd
$ 
Is a concave function and
$
\bd^*(\bomega) = \argmin_{\bd\in\cD} \bomega\transpose\bd
$
is a supergradient of $f$ at $\bomega$.
\end{lemma}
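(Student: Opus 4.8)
The plan is to recognise $f$ as the pointwise infimum of a family of linear functions of $\bomega$, which makes both claims instances of standard convex-analysis facts. First I would observe that for each fixed $\bd\in\cD$ the map $\bomega\mapsto\bomega\transpose\bd$ is linear, hence concave, and continuous; since $\cD$ is compact the infimum defining $f(\bomega)$ is attained, so $f(\bomega)=\min_{\bd\in\cD}\bomega\transpose\bd$ is well defined and $\bd^*(\bomega)$ is a genuine minimiser. Concavity of $f$ then follows because an infimum of concave functions is concave: for $\bomega_1,\bomega_2$ and $\alpha\in[0,1]$, and any $\bd\in\cD$, one has $(\alpha\bomega_1+(1-\alpha)\bomega_2)\transpose\bd=\alpha\,\bomega_1\transpose\bd+(1-\alpha)\,\bomega_2\transpose\bd\ge\alpha f(\bomega_1)+(1-\alpha)f(\bomega_2)$; taking the infimum over $\bd\in\cD$ on the left-hand side yields $f(\alpha\bomega_1+(1-\alpha)\bomega_2)\ge\alpha f(\bomega_1)+(1-\alpha)f(\bomega_2)$.

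For the supergradient claim, fix $\bomega_0$ and write $\bd^*=\bd^*(\bomega_0)$, so that by definition $f(\bomega_0)=\bomega_0\transpose\bd^*$. For an arbitrary $\bomega$ I would then simply bound $f(\bomega)=\inf_{\bd\in\cD}\bomega\transpose\bd\le\bomega\transpose\bd^*=\bomega_0\transpose\bd^*+(\bomega-\bomega_0)\transpose\bd^*=f(\bomega_0)+(\bomega-\bomega_0)\transpose\bd^*$, which is exactly the defining inequality of a supergradient of the concave function $f$ at $\bomega_0$.

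There is essentially no hard step here; the argument is convex-analysis boilerplate. The only points worth spelling out are (i) invoking compactness of $\cD$ together with continuity of the linear functional to guarantee that the $\argmin$ is nonempty, so that $\bd^*(\bomega)$ makes sense; and (ii) being precise about the domain: since $\Delta_K$ is not open, ``supergradient'' is to be understood as a vector $\bg$ with $f(\bomega)\le f(\bomega_0)+(\bomega-\bomega_0)\transpose\bg$ for all $\bomega\in\Delta_K$, and the computation above in fact delivers this inequality for all $\bomega\in\dsR^K$, so $\bd^*(\bomega_0)$ is even a supergradient of the linear-infimum extension of $f$ to the whole space.
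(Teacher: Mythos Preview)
Your argument is correct and is exactly the standard convex-analysis computation one expects here: $f$ is a pointwise infimum of linear (hence concave) functions, and the minimiser $\bd^*(\bomega_0)$ yields the supergradient inequality $f(\bomega)\le \bomega\transpose\bd^*=f(\bomega_0)+(\bomega-\bomega_0)\transpose\bd^*$ directly. The paper does not spell out a proof of this lemma (it is treated as a standard fact), so there is nothing to compare against beyond noting that your write-up matches the intended elementary justification.
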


\begin{lemma}\label{lem:lipsch}
Let $f:\Delta_K \to \dsR$ be a function such that
\[
 f(\bomega) = \inf_{\blambda\in\cA_R(\bmu)}\sumi \omega_ik(\mu_i,\lambda_i)\;.
\]
Then function $f$ is $L$-Lipschitz with respect to $\norm{\,\cdot\,}_1$ for any
\[
L\ge\max_{i,j\in[K]}k(\mu_i,\mu_j)\;.
\] 
\end{lemma}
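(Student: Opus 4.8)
The plan is to show directly that the function $f(\bomega)=\inf_{\blambda\in\cA_R(\bmu)}\sum_{i=1}^K\omega_i k(\mu_i,\lambda_i)$ satisfies $|f(\bomega)-f(\bomega')|\le L\norm{\bomega-\bomega'}_1$ whenever $L\ge\max_{i,j\in[K]}k(\mu_i,\mu_j)$. The key observation is that for \emph{every} fixed $\blambda\in\cA_R(\bmu)$ the map $\bomega\mapsto g_\blambda(\bomega)\triangleq\sum_{i=1}^K\omega_i k(\mu_i,\lambda_i)$ is \emph{linear} in $\bomega$, and an infimum of a family of $L$-Lipschitz functions (with a common constant $L$) is itself $L$-Lipschitz; so it suffices to bound the Lipschitz constant of each $g_\blambda$ uniformly in $\blambda\in\cA_R(\bmu)$. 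For that I need a uniform bound $0\le k(\mu_i,\lambda_i)\le L$ valid for all $\blambda\in\cA_R(\bmu)$ and all $i\in[K]$.

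First I would establish this coordinate bound. Recall $\bmu\in\cM_R$, so $\bmu$ is itself a feasible point of $\cM_R\supseteq\cA_R(\bmu)$; in particular $f(\bmu)$-type quantities are finite. Fix $\blambda\in\cA_R(\bmu)$ and an index $i$. In the Gaussian case $k(\mu_i,\lambda_i)=(\mu_i-\lambda_i)^2/2$, so I must argue that $|\mu_i-\lambda_i|\le|\mu_{i_0}-\mu_{j_0}|$ for the pair $(i_0,j_0)$ achieving $\max_{i,j}k(\mu_i,\mu_j)$, i.e. that $\lambda_i$ cannot leave the interval spanned by the coordinates of $\bmu$ by more than the diameter of $\{\mu_j\}_j$. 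The clean way to see this: the best response $\blambda^*(\bomega)$ never increases any KL term beyond what $\bmu$ already gives against the ``worst'' competitor, because $\blambda$ is a \emph{minimizer}; more precisely, since $\bomega\mapsto\sum_i\omega_i k(\mu_i,\lambda_i)$ is what is being minimized and the alternative obtained by projecting each $\lambda_i$ into $[\min_j\mu_j,\max_j\mu_j]$ is still in $\cA_R(\bmu)$ (projection onto an interval does not increase $S_G$, by the contraction property of coordinatewise truncation for the Laplacian quadratic form, and it cannot destroy the constraint $\lambda_i\ge\lambda_{a^*(\bmu)}$ since the best arm value stays between the extreme $\mu$'s) while strictly decreasing every term it changes — hence the minimizer already satisfies $\lambda_i\in[\min_j\mu_j,\max_j\mu_j]$ for every $i$. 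From this, $|\mu_i-\lambda_i|\le\max_j\mu_j-\min_j\mu_j=\max_{i,j}|\mu_i-\mu_j|$, giving $k(\mu_i,\lambda_i)\le\max_{i,j}k(\mu_i,\mu_j)\le L$.

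With the uniform bound $\norm{(k(\mu_1,\lambda_1),\dots,k(\mu_K,\lambda_K))}_\infty\le L$ in hand, for any $\bomega,\bomega'\in\Delta_K$ and the same $\blambda$,
\[
|g_\blambda(\bomega)-g_\blambda(\bomega')|=\Big|\sum_{i=1}^K(\omega_i-\omega'_i)k(\mu_i,\lambda_i)\Big|\le \norm{\bomega-\bomega'}_1\cdot\max_i k(\mu_i,\lambda_i)\le L\norm{\bomega-\bomega'}_1 ,
\]
by Hölder's inequality. Taking the infimum over $\blambda\in\cA_R(\bmu)$ on both sides — using the elementary fact that $|\inf_\blambda g_\blambda(\bomega)-\inf_\blambda g_\blambda(\bomega')|\le\sup_\blambda|g_\blambda(\bomega)-g_\blambda(\bomega')|$ — yields $|f(\bomega)-f(\bomega')|\le L\norm{\bomega-\bomega'}_1$, which is the claim.

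I expect the only genuine obstacle to be the coordinate-confinement step: arguing rigorously that the minimizing $\blambda$ lives in the box $[\min_j\mu_j,\max_j\mu_j]^K$, since this is where the smoothness constraint interacts with the half-space constraint $\lambda_i\ge\lambda_{a^*(\bmu)}$. The point to be careful about is that coordinatewise truncation of $\blambda$ into this box is a feasibility-preserving operation (it does not increase $\blambda\transpose\cL\blambda$ because each edge term $(\lambda_a-\lambda_b)^2$ is non-increasing under projecting both endpoints onto a common interval, and it keeps $\lambda_i\ge\lambda_{a^*(\bmu)}$) while weakly decreasing the objective; everything else is Hölder plus the stability of infima. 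If one prefers to avoid the truncation argument, an alternative is to note that $f(\bomega)\le f_{\text{one-alt}}(\bomega)$ for the single-alternative problems treated in Lemma~\ref{lem:vanillaoracle}/Section~\ref{sec:spectraloracle}, where the explicit form of $\blambda^i(\bomega)$ exhibits each $\lambda_a$ as a convex combination of $\mu$-coordinates, but the truncation route is cleaner and handles the spectral case uniformly.
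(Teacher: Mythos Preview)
The paper does not include a proof of this lemma, so I evaluate your argument on its own merits.

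Your truncation idea is correct and is the heart of the matter: for any $\blambda\in\cA_R(\bmu)$, coordinatewise projection onto the interval $[\min_j\mu_j,\max_j\mu_j]$ stays in $\cA_R(\bmu)$ (it weakly decreases each edge term $(\lambda_a-\lambda_b)^2$ of $\blambda\transpose\cL\blambda$ and, being a monotone map, preserves the ordering constraint $\lambda_i\ge\lambda_{a^*(\bmu)}$) while weakly decreasing every term of the objective. This indeed shows that the infimum defining $f(\bomega)$ is unchanged if one restricts to the compact set $\cA_R'(\bmu)\triangleq\cA_R(\bmu)\cap[\min_j\mu_j,\max_j\mu_j]^K$, so a minimizer $\blambda^*(\bomega)$ exists there and satisfies $k(\mu_i,\lambda^*_i(\bomega))\le\max_{i,j}k(\mu_i,\mu_j)\le L$ for all $i$.

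There is, however, a genuine gap in how you finish. You invoke the inequality
\[
\bigl|\inf_{\blambda}g_\blambda(\bomega)-\inf_{\blambda}g_\blambda(\bomega')\bigr|\le\sup_{\blambda}\bigl|g_\blambda(\bomega)-g_\blambda(\bomega')\bigr|
\]
with the supremum taken over all of $\cA_R(\bmu)$. But $\cA_R(\bmu)$ is \emph{unbounded}: it is invariant under translation by multiples of $\dsone$ (since $\cL\dsone=0$ and the half-space constraint depends only on differences), so $\sup_{\blambda\in\cA_R(\bmu)}\max_ik(\mu_i,\lambda_i)=\infty$ and the right-hand side gives nothing. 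The coordinate bound you proved applies only to \emph{minimizers}, not to every $\blambda\in\cA_R(\bmu)$, which is what your inequality needs.

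The repair is immediate and you already have the ingredients. Having shown that $f(\bomega)=\inf_{\blambda\in\cA_R'(\bmu)}g_\blambda(\bomega)$ for every $\bomega$, apply your $\sup$-bound over the restricted set $\cA_R'(\bmu)$; there the bound $\max_ik(\mu_i,\lambda_i)\le L$ holds for \emph{every} $\blambda$, and the argument goes through verbatim. Alternatively, bypass the $\sup$ altogether: by Lemma~\ref{lem:linearinf} the vector $\bigl(k(\mu_i,\lambda^*_i(\bomega))\bigr)_{i\in[K]}$ is a supergradient of $f$ at $\bomega$, you have bounded its $\ell_\infty$-norm by $L$, and a concave function whose supergradients are bounded by $L$ in the dual norm $\norm{\,\cdot\,}_\infty$ is $L$-Lipschitz with respect to $\norm{\,\cdot\,}_1$.
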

\section{Best Arm Allocation Algorithm}\label{sec:mirror}
%
Now that we are able to compute best response $\blambda^*(\bomega)$ and therefore, supergradient of concave function $f(\bomega)=\suma\omega_a k(\mu_a,\lambda^*_a(\bomega))$ at $\bomega$, we have all the ingredients needed for any supergradient-based algorithm to compute best arm allocation $\bomega^* = \argmax_{\bomega\in\Delta_K}f(\bomega)$. The algorithm of our choice is mirror ascent algorithm with generalized negative entropy as the mirror map:
\[
\Phi(\bomega) = \suma(\omega_i\log(\omega_i)-\omega_i)\;.
\]

\begin{theorem}
Let $\bomega_1 = (\frac{1}{K},\dots,\frac{1}{K})\transpose$ and learning rate $\eta = \frac{1}{L}\sqrt{\frac{2\log K}{t}}$. Then mirror ascent algorithm optimizing function $f$ defined on $\Delta_K$ with generalized negative entropy $\Phi$ as the mirror map enjoys the following guarantees
\[
f(\bomega^*) - f\left(\frac{1}{t}\sum_{s=1}^t\bomega_s\right) \le L\sqrt{\frac{2\log K}{t}}
\]
for any $L \ge \max_{i,j\in[K]}k(\mu_i,\mu_j)$.
\end{theorem}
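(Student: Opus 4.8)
\noindent The plan is to run the textbook mirror‑ascent analysis, specialised to the negative‑entropy mirror map on $\Delta_K$, and to check that the constants fall out as claimed. First I would collect the two structural ingredients. On the objective side, Lemma~\ref{lem:linearinf} tells us $f$ is concave and $\bd^*(\bomega)$ is a supergradient of $f$ at $\bomega$; write $\bg_s \triangleq \bd^*(\bomega_s)$ for the supergradient used at step $s$. Lemma~\ref{lem:lipsch} tells us $f$ is $L$‑Lipschitz with respect to $\norm{\cdot}_1$, and for a concave function this is equivalent to every supergradient having dual norm at most $L$, i.e. $\norm{\bg_s}_\infty \le L$. On the mirror‑map side, $\Phi$ is Legendre on $\Delta_K$, its Bregman divergence is the Kullback--Leibler divergence, Pinsker's inequality gives $D_\Phi(\bx,\by) \ge \tfrac12 \norm{\bx-\by}_1^2$ (so $\Phi$ is $1$‑strongly convex w.r.t. $\norm{\cdot}_1$), and starting from the uniform vector $\bomega_1$ one has $D_\Phi(\bomega^*,\bomega_1) = \sum_a \omega^*_a\log(K\omega^*_a) \le \log K$.

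Next I would establish the one‑step inequality. Writing the mirror step as $\bomega_{s+1} = \argmax_{\bomega\in\Delta_K}\{\eta\,\bg_s\transpose\bomega - D_\Phi(\bomega,\bomega_s)\}$ (its first‑order optimality condition, combined with the three‑point identity for Bregman divergences) yields
\[
\eta\,\bg_s\transpose(\bomega^*-\bomega_s) \le D_\Phi(\bomega^*,\bomega_s) - D_\Phi(\bomega^*,\bomega_{s+1}) + \eta\,\bg_s\transpose(\bomega_{s+1}-\bomega_s) - D_\Phi(\bomega_{s+1},\bomega_s).
\]
Bounding the last two terms with Fenchel--Young and Pinsker, $\eta\,\bg_s\transpose(\bomega_{s+1}-\bomega_s) - D_\Phi(\bomega_{s+1},\bomega_s) \le \tfrac{\eta^2}{2}\norm{\bg_s}_\infty^2 \le \tfrac{\eta^2 L^2}{2}$, so
\[
\eta\,\bg_s\transpose(\bomega^*-\bomega_s) \le D_\Phi(\bomega^*,\bomega_s) - D_\Phi(\bomega^*,\bomega_{s+1}) + \frac{\eta^2 L^2}{2}.
\]

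Then I would sum over $s=1,\dots,t$: the Bregman terms telescope and $D_\Phi(\bomega^*,\bomega_{t+1})\ge 0$, giving $\eta\sum_{s=1}^t \bg_s\transpose(\bomega^*-\bomega_s) \le \log K + \tfrac{t\eta^2 L^2}{2}$. Concavity gives $f(\bomega^*) - f(\bomega_s) \le \bg_s\transpose(\bomega^*-\bomega_s)$, and Jensen's inequality gives $f(\tfrac1t\sum_s\bomega_s) \ge \tfrac1t\sum_s f(\bomega_s)$; combining the two,
\[
f(\bomega^*) - f\Bigl(\tfrac1t\textstyle\sum_{s=1}^t\bomega_s\Bigr) \le \frac{\log K}{\eta t} + \frac{\eta L^2}{2}.
\]
Substituting $\eta = \tfrac1L\sqrt{\tfrac{2\log K}{t}}$ makes both terms equal to $L\sqrt{\tfrac{\log K}{2t}}$, whose sum is exactly $L\sqrt{\tfrac{2\log K}{t}}$.

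The only genuinely delicate point is the one‑step inequality: one must check the mirror step is well defined on the relative interior of $\Delta_K$ — which it is, since the step has the closed form $\omega_{s+1,a}\propto\omega_{s,a}e^{\eta g_{s,a}}$ and keeps the iterate strictly positive — and apply the three‑point identity together with Pinsker with the right constants, so that the $\sqrt{\log K}$ (rather than $\sqrt{K}$) factor emerges from $D_\Phi(\bomega^*,\bomega_1)\le\log K$. Everything else is routine bookkeeping.
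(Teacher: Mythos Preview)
Your proof is correct and follows essentially the same approach as the paper: verify that $\Phi$ is $1$-strongly convex with respect to $\norm{\cdot}_1$, that the initial Bregman radius is at most $\log K$, and that $f$ is $L$-Lipschitz via Lemma~\ref{lem:lipsch}, then apply the standard mirror-ascent bound. The only difference is presentational: the paper simply checks these three hypotheses and invokes Theorem~4.2 of \cite{bubeck2015} as a black box, whereas you unpack that theorem in full (three-point identity, Fenchel--Young/Pinsker, telescoping, Jensen), arriving at the identical constant $L\sqrt{2\log K/t}$.
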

\begin{proof}
It is easy to show that $\Phi$ is $1$-strongly convex w.r.t. $\norm{\,\cdot\,}_1$ and $\sup_{\bomega\in\Delta_K}\Phi(\bomega) - \Phi(\bomega_1) \le \log K$ since 
\[
\suma \left[\omega_a\log \omega_a - \frac{1}{K} \log\frac{1}{K}\right] \le  \suma -\frac{1}{K} \log\frac{1}{K} = \log K
\]
holds for any $\bomega\in\Delta_K$. Using Lemma~\ref{lem:lipsch} we have that $f$ is $L$-Lipschitz w.r.t. $\norm{\,\cdot\,}_1$ for any $L \ge \max_{i,j\in[K]}k(\mu_i,\mu_j)$. This gives us all the necessary ingredients mirror ascent guarantees in Theorem 4.2 from \cite{bubeck2015}.
\end{proof}

\subsection{Value of Spectral Constraint}\label{sec:theoreticaltime}
Having an assumption that captures the problem might improve the learning speed significantly. We demonstrate this effect in a simplistic scenario where $\bmu = (0.9, 0.5, 0.6)\transpose $, graph has only one edge between nodes $2$ and $3$, and the range of $R$ is from $0.01 = \bmu\transpose\cL\bmu$ to $0.1$. $R=0.1$ is completely non-restrictive and the best response to every $\bomega$ is the same as in the best arm identification problem without the spectral constraint. This can be seen on Figure \ref{fig:frk3} where we plot the value of $T_R^*(\bmu)$ (red curve) as a function of $R$ which is proportional to the stopping time lower bound.

\begin{figure}[t]
\includegraphics[width = .49\textwidth]{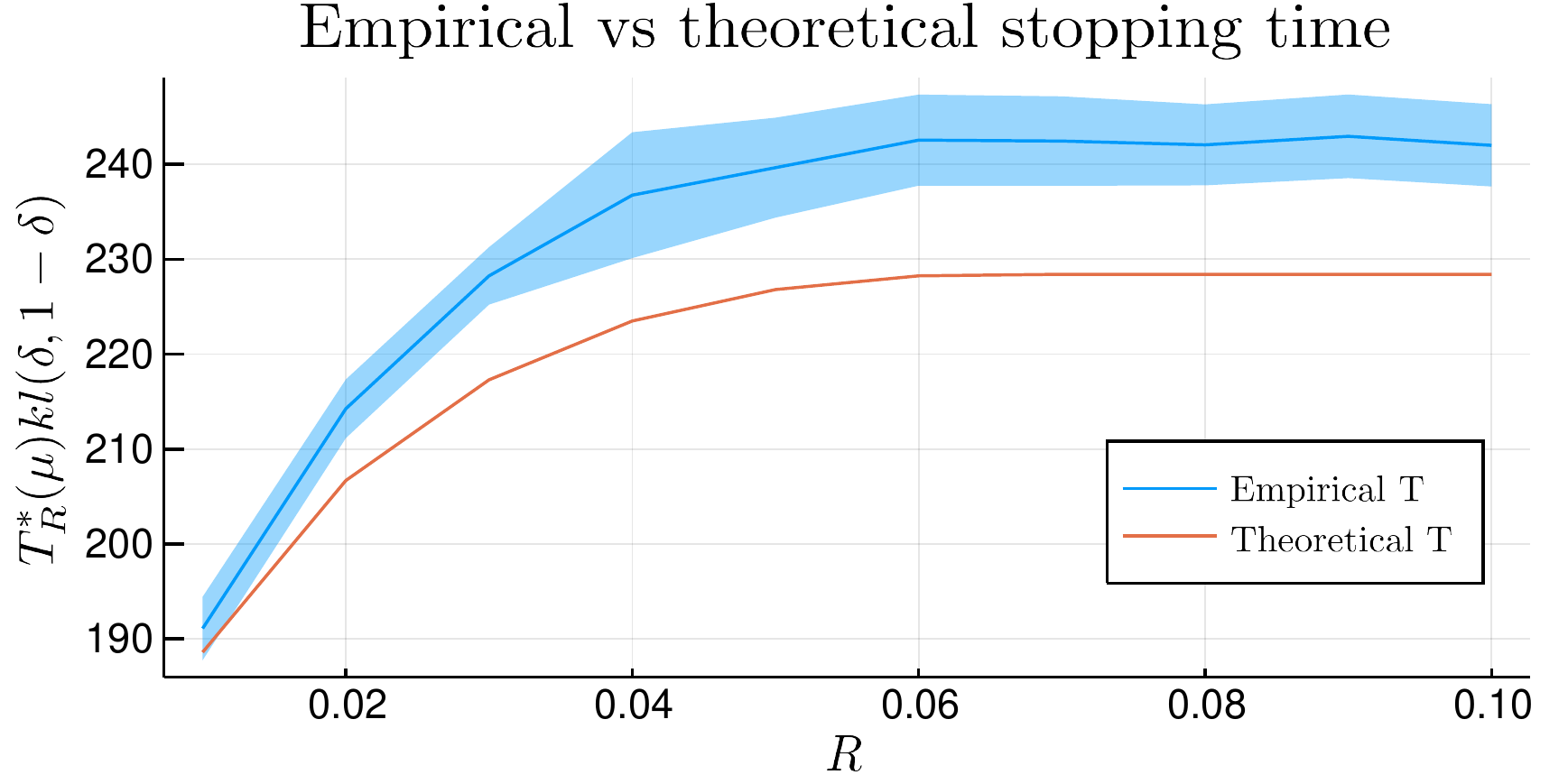}
\vskip -0.7em
\caption{Theoretical vs empirical expected stopping time as a function of smoothness parameter $R$ with optimal value of $R$ being $0.01$.}
\label{fig:frk3}
\vskip -0.5em
\end{figure}

\section{Spectral BAI Algorithm}\label{sec:algo}
The algorithm for the best arm identification with the spectral constraint \SpectralTaS (Algorithm \ref{alg:SpectralTaS}) is a variant of Track-and-Stop algorithm introduced in \cite{garivier2016}. We discuss the main ingredients of the algorithm  in the next part of this section.

\paragraph{Sampling rule.} As a by-product of the lower bound analysis, Proposition~\ref{prop:lowerbound} provides the existence of \emph{optimal sampling weights} $\bomega^*(\bmu)$ that need to be respected in order to reach the optimal sample complexity. \SpectralTaS simply tracks, at every timestep, some guess of these optimal proportions that is obtained by solving the sample complexity optimization problem associated with the \emph{current estimates} $\hat\bmu_t$ of the means $\bmu$. In order to capture a possibility of the initial underestimation of an arm, some level of exploration is needed and enforced by the algorithm: for every $\varepsilon\in(0, 1/K]$, let $\bomega^{*,\varepsilon}(\bmu)$ be an $L^\infty$ projection of $\bomega^*(\bmu)$ onto $\Delta_K^\varepsilon$ defined as $\big\{(w_1,\dots,w_K)\in[\varepsilon, 1]^K : w_1+\dots+w_K=1\big\}$. Then the sampling rule is
\begin{equation}\label{eq:samplerule}
A_{t+1} \in \argmax_{a\in[K]} \sum_{s=0}^t \omega_a^{*,\varepsilon_s}\big(\hat{\bmu}(s)\big) - N_a(t)\;.
\end{equation}
As shown in~\cite{garivier2016optimal} (Lemma 7 and Proposition 9), the choice $\varepsilon_s = (K^2 + s)^{-1/2}/2$ ensures that the number  $N_a(t)$ of draws of arm $a$ converges almost-surely to $\bw_a^*(\mu)$ as $t$ goes to infinity.
\begin{algorithm}[t]
\caption{\SpectralTaS}
\label{alg:SpectralTaS}
\begin{algorithmic}[1]
\STATE \textbf{Input and initialization:}
\STATE \quad $\cL:$ graph Laplacian
\STATE \quad $\delta:$ confidence parameter
\STATE \quad $R:$ upper bound on the smoothness of $\bmu$
\STATE \quad Play each arm $a$ once and observe rewards $r_a$
\STATE \quad $\hat\bmu_1 = (r_1,\dots,r_K)\transpose:$ empirical estimate of $\bmu$
\WHILE {Stopping Rule \eqref{eq:stoprule} not satisfied}
\STATE Compute $\bomega^*(\hat\bmu_t)$ by mirror ascent
\STATE Choose $A_{t}$ according to Sampling Rule \eqref{eq:samplerule}
\STATE Obtain reward $r_t$ of arm $A_t$
\STATE Update $\hat\bmu_t$ according to $r_t$
\ENDWHILE
\STATE \quad Output arm $A_* = \argmax_{a\in[K]} \hat{\mu}_a$
\end{algorithmic}
\end{algorithm}
\paragraph{Stopping rule.}
The algorithm should stop as soon as it has gathered sufficient evidence on the superiority of one of the arms with risk $\delta$. The design of an optimal sequential testing procedure for the hypothesis $\mu_a>\max_{b\neq a} \mu_b$ can be traced back to~\cite{Chernoff59}, and is discussed in detail in~\cite{GK19PAC}. We simply recall its form here: for two arms $a,b\in[K]$, denote by $\hat{\mu}_{a,b}(t)= (N_a(t)\hat{\mu}_a(t) + N_b(t)\hat{\mu}_b(t))/(N_a(t) + N_b(t))$  and by $Z_{a,b} = \mathrm{sign}\big(\hat{\mu}_a(t) - \hat{\mu}_b(t)\big) \big( N_a(t) (\hat{\mu}_a(t)-\hat{\mu}_{a,b}(t))^2 + N_b(t) (\hat{\mu}_b(t)-\hat{\mu}_{a,b}(t))^2\big)/2 $ the generalized likelihood ratio statistics for the test $\mu_a>\mu_b$. Then the stopping rule is given by
\begin{equation}\label{eq:stoprule}
\tau = \inf \Big\{t\in\mathbb{N} : \max_{a\in[K]} \min_{b\neq a} Z_{a,b}(t) > \beta(t,\delta) \Big\}\;,
\end{equation}
where $\beta(\cdot,\cdot)$ is a threshold function to be chosen typically slightly larger than  $\log(1/\delta)$. Theorem 10 in~\cite{garivier2016optimal} shows that the choice $\beta(t,\delta) = \log(2t(K-1)/\delta)$ and $A_{\tau+1} = \argmax_{a\in[K]} \hat{\mu}_a(\tau)$ yields a probability of failure $\P_\nu\left(A_{\tau+1} \notin a^*(\bmu)\right) \leq \delta$.

\paragraph{Empirical evaluation.}
Using the same experiment setting as in Section \ref{sec:theoreticaltime} we have evaluated \SpectralTaS for 10 different values of $R$ ranging from true $R = \bmu\transpose\cL\bmu = 0.01$ to $R= 0.1$ and we plotted theoretical stopping time (red line) as well as average and standard deviation of 20 runs of \SpectralTaS algorithm (blue line). Figure \ref{fig:frk3} shows that the algorithm can utilize the spectral constraint and improve stopping time significantly whenever the value of $R$ is close to the real smoothness of the problem, as suggested by theory.

\section* {Acknowledgements}
Aurélien Garivier (ANR chaire SeqALO)  and Tomáš Kocák acknowledge the support of the Project IDEXLYON of the University of Lyon, in the framework of the Programme Investissements d'Avenir (ANR-16-IDEX-0005).

\newpage

\bibliographystyle{named}
\bibliography{library}

\begin{thebibliography}{}

\bibitem[\protect\citeauthoryear{Alon \bgroup \em et al.\egroup
  }{2013}]{alon2013from}
Noga Alon, Nicol{\`{o}} Cesa-Bianchi, Claudio Gentile, and Yishay Mansour.
\newblock {From bandits to experts: A tale of domination and independence}.
\newblock In {\em Neural Information Processing Systems}, 2013.

\bibitem[\protect\citeauthoryear{Alon \bgroup \em et al.\egroup
  }{2017}]{alon2014nonstochastic}
Noga Alon, Nicol{\`{o}} Cesa-Bianchi, Claudio Gentile, Shie Mannor, Yishay
  Mansour, and Ohad Shamir.
\newblock {Nonstochastic multi-armed bandits with graph-structured feedback}.
\newblock {\em SIAM Journal on Computing}, 46(6):1785--1826, 2017.

\bibitem[\protect\citeauthoryear{Bubeck}{2015}]{bubeck2015}
S{\'{e}}bastien Bubeck.
\newblock {Convex optimization: Algorithms and complexity}.
\newblock {\em Foundations and Trends in Machine Learning}, 8(3-4):231--357,
  2015.

\bibitem[\protect\citeauthoryear{Chernoff}{1959}]{Chernoff59}
Herman Chernoff.
\newblock {Sequential design of Experiments}.
\newblock {\em The Annals of Mathematical Statistics}, 30(3):755--770, 1959.

\bibitem[\protect\citeauthoryear{Degenne and Koolen}{2019}]{Degenne2019}
R{\'{e}}my Degenne and Wouter~M. Koolen.
\newblock {Pure Exploration with Multiple Correct Answers}.
\newblock {\em technical report}, feb 2019.

\bibitem[\protect\citeauthoryear{Degenne \bgroup \em et al.\egroup
  }{2019}]{Degenne2019a}
R{\'{e}}my Degenne, Wouter~M. Koolen, and Pierre M{\'{e}}nard.
\newblock {Non-Asymptotic Pure Exploration by Solving Games}.
\newblock {\em technical report}, jun 2019.

\bibitem[\protect\citeauthoryear{Even-Dar \bgroup \em et al.\egroup
  }{2006}]{even2006action}
Eyal Even-Dar, Shie Mannor, and Yishay Mansour.
\newblock {Action elimination and stopping conditions for the multi-armed
  bandit and reinforcement learning problems}.
\newblock {\em Journal of Machine Learning Research}, 7:1079--1105, 2006.

\bibitem[\protect\citeauthoryear{Gabillon \bgroup \em et al.\egroup
  }{2012}]{gabillon2012best}
Victor Gabillon, Mohammad Ghavamzadeh, and Alessandro Lazaric.
\newblock {Best arm identification: A unified approach to fixed budget and
  fixed confidence}.
\newblock In {\em Neural Information Processing Systems}, 2012.

\bibitem[\protect\citeauthoryear{Garivier and
  Kaufmann}{2016a}]{garivier2016optimal}
Aur{\'e}lien Garivier and Emilie Kaufmann.
\newblock Optimal best arm identification with fixed confidence.
\newblock In {\em Proceedings of the 29th Conference On Learning Theory}, 2016.

\bibitem[\protect\citeauthoryear{Garivier and Kaufmann}{2016b}]{garivier2016}
Aur{\'{e}}lien Garivier and Emilie Kaufmann.
\newblock {Optimal Best Arm Identification with Fixed Confidence}.
\newblock {\em 29th Annual Conference on Learning Theory}, 2016.

\bibitem[\protect\citeauthoryear{Garivier and Kaufmann}{2019}]{GK19PAC}
Aur{\'e}lien Garivier and Emilie Kaufmann.
\newblock {Non-Asymptotic Sequential Tests for Overlapping Hypotheses Applied
  to Near Optimal Arm Identification in Bandit Models}.
\newblock {\em technical report}, jun 2019.

\bibitem[\protect\citeauthoryear{Koc{\'{a}}k \bgroup \em et al.\egroup
  }{2014}]{kocak2014efficient}
Tom{\'{a}}{\v{s}} Koc{\'{a}}k, Gergely Neu, Michal Valko, and R{\'{e}}mi Munos.
\newblock {Efficient learning by implicit exploration in bandit problems with
  side observations}.
\newblock In {\em Neural Information Processing Systems}, 2014.

\bibitem[\protect\citeauthoryear{Koc{\'{a}}k \bgroup \em et al.\egroup
  }{2016a}]{kocak2016onlinea}
Tom{\'{a}}{\v{s}} Koc{\'{a}}k, Gergely Neu, and Michal Valko.
\newblock {Online learning with Erdos-R{\'{e}}nyi side-observation graphs}.
\newblock In {\em Conference on Uncertainty in Artificial Intelligence}, 2016.

\bibitem[\protect\citeauthoryear{Koc{\'{a}}k \bgroup \em et al.\egroup
  }{2016b}]{kocak2016online}
Tom{\'{a}}{\v{s}} Koc{\'{a}}k, Gergely Neu, and Michal Valko.
\newblock {Online learning with noisy side observations}.
\newblock In {\em International Conference on Artificial Intelligence and
  Statistics}, 2016.

\bibitem[\protect\citeauthoryear{Koc{\'{a}}k \bgroup \em et al.\egroup
  }{2019}]{kocak2018spectral}
Tom{\'{a}}{\v{s}} Koc{\'{a}}k, R{\'{e}}mi Munos, Branislav Kveton, Shipra
  Agrawal, and Michal Valko.
\newblock {Spectral Bandits}.
\newblock {\em Journal of Machine Learning Research}, 2019.

\bibitem[\protect\citeauthoryear{Kocák \bgroup \em et al.\egroup
  }{2014}]{kocak2014spectral}
Tomáš Kocák, Michal Valko, R{\'{e}}mi Munos, and Shipra Agrawal.
\newblock {Spectral Thompson sampling}.
\newblock In {\em Proceedings of the National Conference on Artificial
  Intelligence}, 2014.

\bibitem[\protect\citeauthoryear{Lattimore and Szepesv\'ari}{2019}]{lattimore}
Tor Lattimore and Csaba Szepesv\'ari.
\newblock {\em Bandit Algorithms}.
\newblock Cambridge University Press, 2019.

\bibitem[\protect\citeauthoryear{Mannor and Shamir}{2011}]{mannor2011from}
Shie Mannor and Ohad Shamir.
\newblock {From bandits to experts: On the value of side-observations}.
\newblock In {\em Neural Information Processing Systems}, 2011.

\bibitem[\protect\citeauthoryear{Russo}{2016}]{Russo16}
Daniel Russo.
\newblock Simple bayesian algorithms for best arm identification.
\newblock In {\em Proceedings of the 29th Conference On Learning Theory}, 2016.

\bibitem[\protect\citeauthoryear{Valko \bgroup \em et al.\egroup
  }{2014}]{valko2014spectral}
Michal Valko, R{\'{e}}mi Munos, Branislav Kveton, and Tom{\'{a}}{\v{s}}
  Koc{\'{a}}k.
\newblock {Spectral bandits for smooth graph functions}.
\newblock In {\em International Conference on Machine Learning}, 2014.

\end{thebibliography}

\end{document}